\def\1{\bm{1}}
\def\eps{{\epsilon}}
\def\rvB{{\mathbf{B}}}
\def\rmB{{\mathbf{B}}}
\def\vzero{{\bm{0}}}
\def\vgamma{{\bm{\gamma}}}
\def\veps{{\bm{\varepsilon}}}
\def\vbeta{{\bm{\beta}}}
\def\vf{{\bm{f}}}
\def\vh{{\bm{h}}}
\def\vv{{\bm{v}}}
\def\vw{{\bm{w}}}
\def\vx{{\bm{x}}}
\def\vy{{\bm{y}}}
\def\vz{{\bm{z}}}
\def\mzero{{\bm{0}}}
\def\mG{{\bm{G}}}
\def\mI{{\bm{I}}}
\def\mSigma{{\bm{\Sigma}}}
\DeclareMathAlphabet{\mathsfit}{\encodingdefault}{\sfdefault}{m}{sl}
\SetMathAlphabet{\mathsfit}{bold}{\encodingdefault}{\sfdefault}{bx}{n}
\def\gF{{\mathcal{F}}}
\def\sR{{\mathbb{R}}}
\newcommand{\E}{\mathbb{E}}
\newtheorem{definition}{Definition}[section]
\newtheorem{theorem}{Theorem}[section]
\newtheorem{corollary}{Corollary}[theorem]
\newtheorem{lemma}[theorem]{Lemma}
\newtheorem{assumption}{Assumption}
\title{Neural SDE: Stabilizing Neural ODE Networks with Stochastic Noise}
\author{Xuanqing Liu \\
  Department of Computer Science, UCLA\\
  \texttt{xqliu@cs.ucla.edu}
   \and
   Tesi Xiao \\
   Department of Statistics, UC Davis \\
   \texttt{texiao@ucdavis.edu}
   \and
   Si Si \\
   Google Research \\
  \texttt{sisidaisy@google.com}
   \and
   Qin Cao \\
   Google Research \\
   \texttt{qincao@google.com}
   \and
   Sanjiv Kumar \\
   Google Research \\
   \texttt{sanjivk@google.com}
   \and
   Cho-Jui Hsieh \\
    Department of Computer Science, UCLA \\
   \texttt{chohsieh@cs.ucla.edu}
}
\begin{document}
\maketitle
\begin{abstract}
Neural Ordinary Differential Equation (Neural ODE) has been proposed as a continuous approximation to the ResNet architecture. Some commonly used regularization mechanisms in discrete neural networks (e.g. dropout, Gaussian noise) are missing in current Neural ODE networks. In this paper, we propose a new continuous neural network framework called Neural Stochastic Differential Equation (Neural SDE) network, which naturally incorporates various commonly used regularization mechanisms based on random noise injection. Our framework can model various types of noise injection frequently used in discrete networks for regularization purpose, such as dropout and additive/multiplicative noise in each block. We provide theoretical analysis explaining the improved robustness of Neural SDE models against input perturbations/adversarial attacks. Furthermore, we demonstrate that the Neural SDE network can achieve better generalization than the Neural ODE and is more resistant to adversarial and non-adversarial input perturbations.
\end{abstract}

\section{Introduction}
Residual neural networks (ResNet)~\cite{he2016deep} are composed of multiple residual blocks
transforming the hidden states according to:
\begin{equation}
    \label{eq:resnet}
    \vh_{n+1}=\vh_n+\vf(\vh_n; \vw_n),
\end{equation}
where $\vh_n$ is the input to the $n$-th layer and $\vf(\vh_n; \vw_n)$ is a non-linear function parameterized by $\vw_n$. Recently, a continuous approximation to the ResNet architecture has been proposed~\cite{chen2018neural}, where the evolution of the hidden state $\vh_t$ can be described as a dynamic system obeying the equation:
\begin{equation}
    \label{eq:neural-ode}
 \vh_t=\vh_s+\int_s^t \vf(\vh_{\tau}, \tau;\vw)\dif\tau,
\end{equation}
where $\vf(\vh_{\tau}, \tau;\vw)$ is the continuous form of the nonlinear function $\vf(\vh_n; \vw_n)$; $\vh_s$ and $\vh_t$ are hidden states at two different time $s\ne t$. A standard ODE solver can be used to solve all the hidden states and final states (output from the neural network), starting from an initial state (input to the neural network).
The continuous neural network described in~\eqref{eq:neural-ode} exhibits several advantages over its discrete counterpart described in~\eqref{eq:resnet}, in terms of memory efficiency, parameter efficiency, explicit control of the numerical error of final output, \textit{etc.}

One missing component in the current Neural ODE network is the various regularization mechanisms commonly employed in discrete neural networks. These regularization techniques have been demonstrated to be crucial in reducing generalization errors, and in improving the robustness of neural networks to adversarial attacks. Many of these regularization techniques are based on stochastic noise injection. For instance, dropout~\cite{srivastava2014dropout} is widely adopted to prevent overfitting; injecting Gaussian random noise during the forward propagation is effective in improving  generalization~\cite{bishop1995training,an1996effects} as well as robustness to adversarial attacks~\cite{liu2018towards,lecuyer2018certified}. However, these regularization methods in discrete neural networks are not directly applicable to Neural ODE network, because Neural ODE network is a deterministic system.

Our work attempts to incorporate the above-mentioned stochastic noise injection based regularization mechanisms to the current Neural ODE network, to improve the generalization ability and the robustness of the network. 
In this paper, we propose a new continuous neural network framework called {\bf Neural Stochastic Differential Equation (Neural SDE)} network, which models stochastic noise injection by stochastic differential equations (SDE). 
In this new framework, we can employ existing techniques from the stability theory of SDE to study the robustness of neural networks. Our results provide theoretical insights to understanding why introducing stochasticity during neural network training and testing leads to improved robustness against adversarial attacks.
Furthermore, we demonstrate that, by incorporating the noise injection regularization mechanism to the continuous neural network, we can reduce overfitting and achieve lower generalization error. For instance, on the CIFAR-10 dataset, we observe that the new Neural SDE can improve the test accuracy of the Neural ODE from 81.63\% to 84.55\%, with other factors unchanged. Our contributions can be summarized as follows: 
\begin{itemize}[noitemsep,leftmargin=*]
    \item We propose a new Stochastic Differential Equation (SDE) framework to incorporate randomness in continuous neural networks. The proposed random noise injection can be used as a drop-in component in any continuous neural networks. Our Neural SDE framework can model various types of noises widely used for regularization purpose in discrete networks, such as dropout (Bernoulli type) and Gaussian noise. 
    \item  Training the new SDE network requires developing different backpropagation approach from the Neural ODE network. We develop a new efficient backpropagation method to calculate the gradient, and to train the Neural SDE network in a scalable way. The proposed method has its roots in stochastic control theory.
    \item We carry out a theoretical analysis of the stability conditions of the Neural SDE network, to prove that the randomness introduced in the Neural SDE network can stabilize the dynamical system, which helps improve the robustness and generalization ability of the neural network. 
    \item We verify by numerical experiments that stochastic noise injection in the SDE network can successfully regularize the continuous neural network models, and the proposed Neural SDE network achieves better robustness and improves generalization performance. 
\end{itemize}
\paragraph{Notations:} Throughout this paper, we use $\vh\in\sR^n$ to denote the hidden states in a neural network, where $\vh_0=\vx$ is the input (also called initial condition) and $y$ is the label. The residual block with parameters $\vw\in\sR^d$ can be written as a nonlinear transform $\vf(\vh_{\tau}, \tau; \vw)$. We assume the integration is always taken from $0$ to $T$. $\rvB_t\in\sR^m$ is $m$-dimensional Brownian motion. $\mG(\vh_{\tau}, \tau;\vv)\in\sR^{n\times m}$ is the diffusion matrix parameterized by $\vv$. Unless stated explicitly, we use $\|\cdot\|$ to represent $\ell_2$-norm for vector and Frobenius norm for matrix.
\section{Related work}
Our work is inspired by the success of the recent Neural ODE network, and we seek to improve the generalization and robustness of Neural ODE, by adding regularization mechanisms crucial to the success of discrete networks.  Regularization mechanisms such as dropout cannot be easily incorporated in the Neural ODE due to its deterministic nature.
\paragraph{Neural ODE} 
The basic idea of Neural ODE is discussed in the previous section, here we briefly review relevant literature. The idea of formulating ResNet as a dynamic system was discussed in~\cite{weinan2017proposal}. A framework was proposed to link  existing deep architectures with discretized numerical ODE solvers~\cite{lu2018beyond}, and was shown to be parameter efficient. These networks adopt layer-wise architecture -- each layer is parameterized by different independent weights. The Neural ODE model~\cite{chen2018neural} computes hidden states in a different way: it directly models the dynamics of hidden states by an ODE solver, with the dynamics parameterized by a shared model. A memory efficient approach to compute the gradients by adjoint methods was developed, making it possible to train large, multi-scale generative networks~\cite{ardizzone2018analyzing,grathwohl2018ffjord}. Our work can be regarded as an extension of this framework, with the purpose of incorporating a variety of noise-injection based regularization mechanisms. Stochastic differential equation in the context of neural network has been studied before, focusing either on understanding how dropout shapes the loss landscape~\cite{sun2018stochastic}, or on using stochastic differential equation as a universal function approximation tool to learn the solution of high dimensional PDEs~\cite{raissi2018forward}. Instead, our work tries to explain why adding random noise boosts the stability of deep neural networks, and demonstrates the improved generalization and robustness. 
\paragraph{Noisy Neural Networks} Adding random noise to different layers is a technique commonly employed in training neural networks. Dropout~\cite{srivastava2014dropout} randomly disables some neurons to avoid overfitting, which can be viewed as multiplying hidden states with Bernoulli random variables. Stochastic depth neural network~\cite{huang2016deep} randomly drops some residual blocks of residual neural network during training time. Another successful regularization for ResNet is Shake-Shake regularization~\cite{gastaldi2017shake}, which sets a binary random variable to randomly switch between two residual blocks during training. More recently, dropblock~\cite{ghiasi2018dropblock} was designed specifically for convolutional layers: unlike dropout, it drops some continuous regions rather than sparse points to hidden states. All of the above regularization techniques are proposed to improve generalization performance. One common characteristic of them is that \emph{they fix the network during testing time}. There is another line of research that focuses on improving robustness to perturbations/adversarial attacks by noise injection. Among them, random self-ensemble~\cite{liu2018towards,lecuyer2018certified} adds Gaussian noise to hidden states during both training and testing time. In training time, it works as a regularizer to prevent overfitting; in testing time, the random noise is also helpful, which will be explained in this paper.
\section{Neural Stochastic Differential Equation}
\begin{wrapfigure}{r}{0.4\textwidth}
    \begin{center}
    \includegraphics[width=0.39\textwidth]{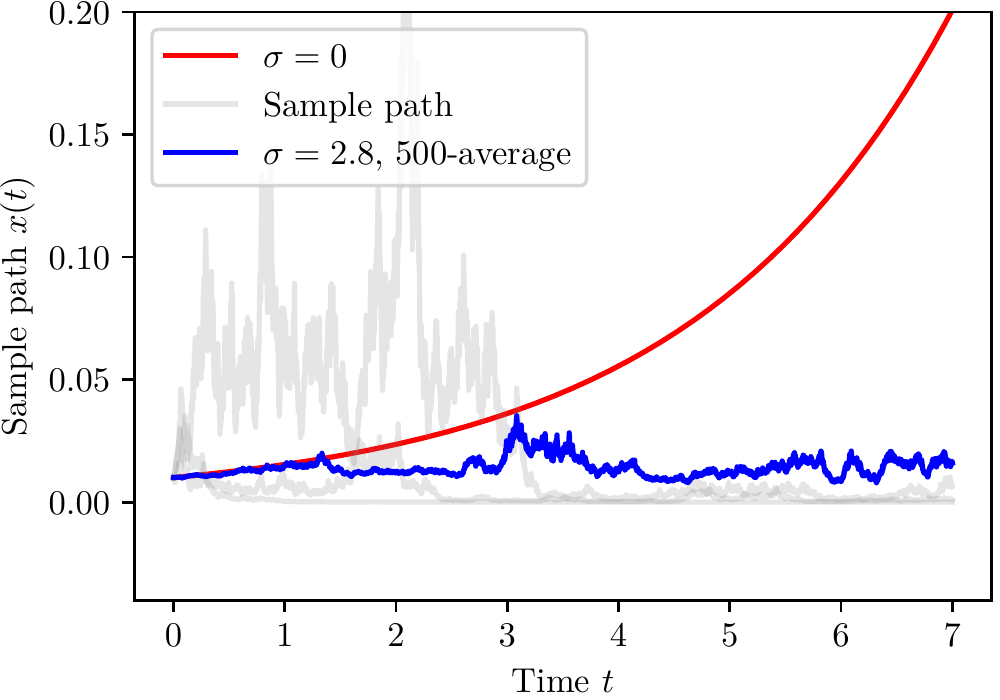}
    \end{center}
    \caption{Toy example. By comparing the simulations under $\sigma=0$ and $\sigma=2.8$, we see adding noise to the system can be an effective way to control $x_t$. Average over multiple runs is used to cancel out the volatility during the early stage.}
    \label{fig:toy_example}
\end{wrapfigure}
In this section, we first introduce our proposed Neural SDE to improve the robustness of Neural ODE. Informally speaking, Neural SDE can be viewed as using randomness as a drop-in augmentation for Neural ODE, and it can include some widely used randomization layers such as dropout and Gaussian noise layer~\cite{liu2018towards}. However, solving Neural SDE is non-trivial, we derive the gradients of loss over model weights. Finally we theoretically analyze the stability conditions of Neural SDE.
\par
Before delving into the multi-dimensional SDE, let's first look at a 1-d toy example to see how SDE can solve the instability issue of ODE. Suppose we have a simple SDE, $\dif x_t=x_t\dif t+\sigma x_t\dif B_t$ with $B_t$ be the standard Brownian motion.
We provide a numerical simulation in Figure~\ref{fig:toy_example} for $x_t$ with different $\sigma$.

When we set $\sigma=0$,  SDE becomes ODE $\dif x_t=x_t\dif t$ and $x_t=c_0e^t$ where $c_0$ is an integration constant. If $c_0\ne 0$ we can see that $x_t\to\pm\infty$. Furthermore, a small perturbation in $x_t$ will be amplified through $t$. This clearly shows instability of ODE. On the other hand, if we instead make $\sigma>1$ (the system is SDE), we have $x_t=c_0\exp\big((1-\sigma^2)t+\sigma B_t\big)\overset{\mathrm{a.s.}}{\to}0$. 
\begin{figure}[htb]
    \centering
    \includegraphics[width=0.8\linewidth]{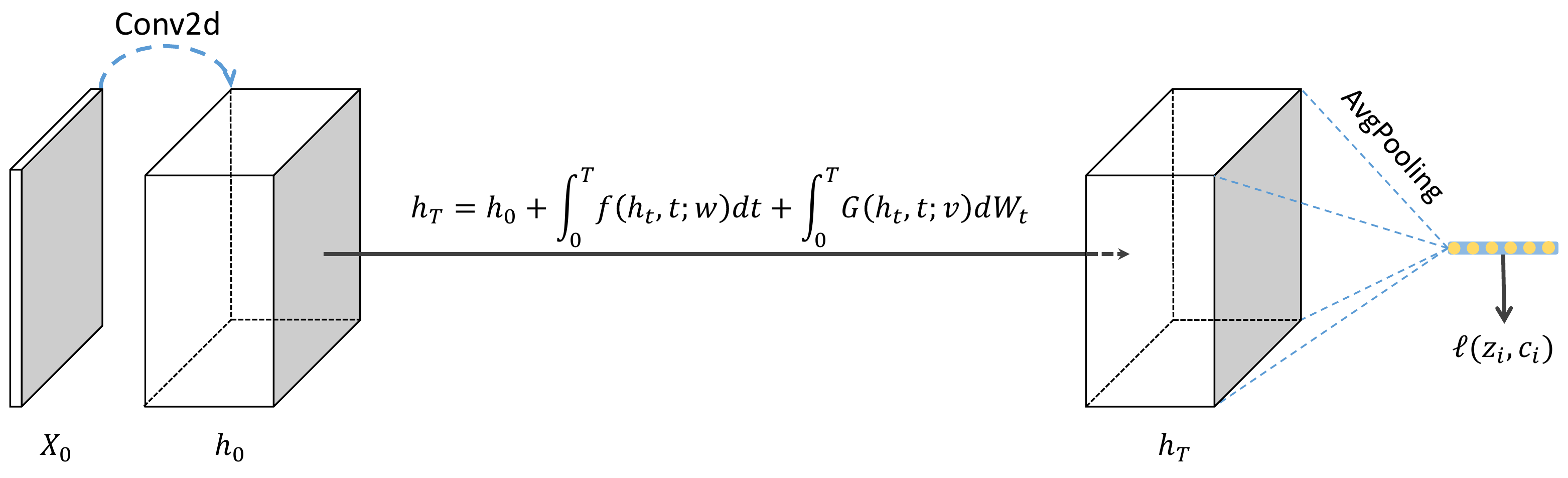}
    \caption{Our model architecture. The initial value of SDE is the output of a convolutional layer, and the value at time $T$ is passed to a linear classifier after average pooling.}
    \label{fig:model_architecture}
\end{figure}
\par
The toy example in Figure~\ref{fig:toy_example} reveals that the behavior of solution paths can change significantly after adding a stochastic term. This example is inspiring because we can control the impact of perturbations on the output by adding a stochastic term to neural networks. 

Figure \ref{fig:model_architecture} shows a sample Neural SDE model architecture, and it is the one used in the experiment. It consists of three parts, the first part is a single convolution block, followed by a Neural SDE network (we will explain the detail of Neural SDE in Section \ref{sec:randomness}) and lastly the linear classifier. We put most of the trainable parameters into the second part (Neural SDE), whereas the first/third parts are mainly for increasing/reducing the dimension as desired. Recall that both Neural ODE and SDE are dimension preserving.
\subsection{Modeling randomness in neural networks}
\label{sec:randomness}
In the Neural ODE system~\eqref{eq:neural-ode}, a slightly perturbed input state will be amplified in deep layers (as shown in Figure~\ref{fig:toy_example}) which makes the system unstable to input perturbation and prone to overfitting. Randomness is an important component in discrete networks (e.g., dropout for regularization) to tackle this issue, however to our knowledge, there is no existing work concerning adding randomness in the continuous neural networks. And it is non-trivial to encode randomness in continuous neural networks, such as Neural ODE, as we need to consider how to add randomness so that to guarantee the robustness, and how to solve the continuous system efficiently. To solve these challenges, motivated by ~\cite{lu2018beyond,sun2018stochastic}, we propose to add a single diffusion term into Neural ODE as:
\begin{equation}
    \label{eq:neural-sde}
    \dif \vh_t=\vf(\vh_t,t;\vw)\dif t +\mG(\vh_t,t;\vv)\dif\rvB_t,
\end{equation}
where $\rvB_t$ is the standard Brownian motion~\cite{oksendal2003stochastic}, which is a continuous time stochastic process such that $\rvB_{t+s}-\rvB_s$ follows Gaussian with mean $0$ and variance $t$; $\mG(\vh_t,t;\vv)$ is a transformation parameterized by $\vv$. 
This formula is quite general, and can include many existing randomness injection models with residual connections under different forms of $\mG(\vh_t,t;\vv)$. As examples, we briefly list some of them below.
\paragraph{Gaussian noise injection:}
Consider a simple example in \eqref{eq:neural-sde} when $\mG(\vh_t, t; \vv)$ is a diagonal matrix, and we can model both additive and multiplicative noise as 
\begin{equation}
    \label{eq:RSE-continuous}
    \begin{aligned}
    \text{additive: }&\dif \vh_t=\vf(\vh_t, t;\vw)\dif t+\mSigma(t)\dif \rvB_t \\
    \text{multiplicative:}&\dif \vh_t=\vf(\vh_t, t;\vw)\dif t+\mSigma(\vh_t, t)\dif \rvB_t, 
    \end{aligned}
\end{equation}
where $\mSigma(t)$ is a diagonal matrix and its diagonal elements control the variance of the noise added to hidden states. This can be viewed as a continuous approximation of noise injection techniques in discrete neural network. For example, the discrete version of the additive noise can be written as
\begin{equation}
    \label{eq:RSE-discrete}
    \vh_{n+1}=\vh_n+\vf(\vh_n;\vw_n)+\mSigma_n\vz_n, \quad\text{with\ \ } \mSigma_n=\sigma_n\mI,\ \vz_n\overset{\text{i.i.d.}}{\sim}\mathcal{N}(0, 1),
\end{equation}
which injects Gaussian noise after each residual block. It has been shown that injecting small Gaussian noise can be viewed as a regularization in neural networks~\cite{bishop1995training,an1996effects}. Furthermore, \cite{liu2018towards,lecuyer2018certified} recently showed that adding a slightly larger noise in one or all residual blocks can improve the adversarial robustness of neural networks. We will provide the stability analysis of \eqref{eq:neural-sde} in Section \ref{sec:robustness}, which provides a theoretical explanation towards the robustness of Neural SDE.

\paragraph{Dropout:} 
Our framework can also model the dropout layer which randomly disables some neurons in the residual blocks. Let us see how to unify dropout under our Neural SDE framework.
First we notice that in the discrete case
\begin{equation}
    \label{eq:dropout-sde}
    \vh_{n+1}=\vh_{n} + \vf(\vh_n;\vw_n)\odot \frac{\vgamma_n}{p}=\vh_{n}+\vf(\vh_n;\vw_n) + \vf(\vh_n;\vw_n)\odot (\frac{\vgamma_n}{p}-\mI),
\end{equation}
where $\vgamma_n\stackrel{\text{i.i.d.}}{\sim}\mathcal{B}(1, p)$ and $\odot$ indicates the Hadamard product. Note that we divide $\gamma_n$ by $p$ in \eqref{eq:dropout-sde} to maintain the same expectation. 
Furthermore, we have 
\begin{equation}\label{eq:split-variance}
\frac{\vgamma_n}{p}-I=\sqrt{\frac{1-p}{p}}\cdot\boxed{\sqrt{\frac{p}{1-p}}\Big(\frac{\gamma_n}{p}-\mI\Big)}\approx \sqrt{\frac{1-p}{p}}\vz_n, \quad \vz_n\stackrel{\text{i.i.d.}}{\sim}\mathcal{N}(0, 1).
\end{equation}
The boxed part above is approximated by standard normal distribution (by matching the first and second order moment). The final SDE with dropout can be obtained by combining \eqref{eq:dropout-sde} with \eqref{eq:split-variance}
\begin{equation}
    \label{eq:dropout-ode-final}
    \dif \vh_{t}=\vf(\vh_t, t;\vw)\dif t+\sqrt{\frac{1-p}{p}}\vf(\vh_t, t;\vw)\odot\dif \rvB_t.
\end{equation}
\paragraph{Others:} 
\cite{lu2018beyond}~includes some other stochastic layers that can be formulated under Neural SDE framework, including shake-shake regularization~\cite{gastaldi2017shake} and stochastic depth~\cite{huang2016deep}. Both of them are used as regularization techniques that work very similar to dropout. 
\subsection{Back-propagating through SDE integral}
To optimize the parameters $\vw$, we need to back-propagate the Neural SDE system. A straightforward solution is to rely on the autograd method derived from chain rule. However, for Neural SDE the chain can be fairly long. If SDE solver discretizes the range $[t_0, t_1]$ to $N$ intervals, then the chain has $\mathcal{O}(N)$ nodes and the memory cost is $\mathcal{O}(N)$.
One challenging part of backpropagation for Neural SDE is to calculate the gradient through SDE solver which could have high memory cost. To solve this issue, we first calculate the \emph{expected loss} conditioning on the initial value $\vh_0$, denoted as $L=\mathbb{E}[\ell(\vh_{t_1})|\vh_0]$. Then our goal is to calculate $\frac{\partial L}{\partial\vw}$. In fact, we have the following theorem (also called \emph{path-wise} gradient~\cite{yang1991monte,gobet2005sensitivity}).
\begin{theorem}
    \label{th:gradient-sde}
    For continuously differentiable loss $\ell(\vh_{t_1})$, we can obtain an unbiased gradient estimator as
    \begin{equation}\label{eq:mc-gradient}
    \widehat{\frac{\partial L}{\partial \vw}}=\frac{\partial\ell(\vh_{t_1})}{\partial\vw}=\frac{\partial \ell(\vh_{t_1})}{\partial\vh_{t_1}}\cdot\frac{\partial\vh_{t_1}}{\partial\vw}.
    \end{equation}
    Moreover, if we define $\vbeta_t=\frac{\partial\vh_{t}}{\partial\vw}$, then $\vbeta_t$ follows another SDE
    \begin{equation}
        \label{eq:adjoint-sde}
        \dif \vbeta_t=\Big(\frac{\partial \vf(\vh_t,t;\vw)}{\partial\vw}+\frac{\partial \vf(\vh_t,t;\vw)}{\partial\vh_t}\vbeta_t\Big)\dif t+\Big(\frac{\partial\mG(\vh_t,t;\vw)}{\partial \vw}+\frac{\partial\mG(\vh_t,t;\vw)}{\partial \vh_t}\vbeta_t\Big)\dif \rvB_t.
    \end{equation}
    It is easy to check that if $\mG\equiv\mzero$, then our Monte-Carlo gradient estimator~\eqref{eq:mc-gradient} falls back to the exact gradient by back-propagation.
\end{theorem}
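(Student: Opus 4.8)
The plan is to establish the two assertions in turn: first derive the \emph{variational SDE}~\eqref{eq:adjoint-sde} obeyed by $\vbeta_t=\partial\vh_t/\partial\vw$, and then use it, together with an interchange of differentiation and expectation, to obtain the unbiasedness of the estimator~\eqref{eq:mc-gradient}. Throughout I would work from the integral form of~\eqref{eq:neural-sde},
\begin{equation*}
\vh_t=\vh_0+\int_0^t\vf(\vh_s,s;\vw)\dif s+\int_0^t\mG(\vh_s,s;\vw)\dif\rvB_s,
\end{equation*}
under the standing regularity hypotheses implicit in the statement: $\vf$ and $\mG$ are $C^1$ in $(\vh,\vw)$ with globally Lipschitz, at most linearly growing derivatives, so that a unique strong solution exists with finite moments of all orders.

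For the variational SDE, I would differentiate the integral equation above with respect to a coordinate direction of $\vw$. Since $\vh_0=\vx$ does not depend on $\vw$ we have $\vbeta_0=\mzero$; granting for the moment that $\partial/\partial\vw$ may be taken inside both the Lebesgue and the It\^o integral, the chain rule gives $\partial_\vw\vf(\vh_s,s;\vw)=\tfrac{\partial\vf}{\partial\vw}+\tfrac{\partial\vf}{\partial\vh_s}\vbeta_s$ and analogously for $\mG$, whence
\begin{equation*}
\vbeta_t=\int_0^t\Big(\tfrac{\partial\vf}{\partial\vw}+\tfrac{\partial\vf}{\partial\vh_s}\vbeta_s\Big)\dif s+\int_0^t\Big(\tfrac{\partial\mG}{\partial\vw}+\tfrac{\partial\mG}{\partial\vh_s}\vbeta_s\Big)\dif\rvB_s,
\end{equation*}
which is exactly~\eqref{eq:adjoint-sde} in differential form. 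Note this is a \emph{linear} SDE in $\vbeta_t$ with adapted, linearly growing coefficients, so it has its own unique strong solution; the substantive content is that this solution really is the derivative of the flow $\vw\mapsto\vh_t$.

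Granting that, the rest is short. The chain rule for the $C^1$ function $\ell$ gives, almost surely, $\partial_\vw\ell(\vh_{t_1})=\tfrac{\partial\ell(\vh_{t_1})}{\partial\vh_{t_1}}\vbeta_{t_1}$, which is precisely the right-hand side of~\eqref{eq:mc-gradient}. Taking conditional expectations and swapping $\partial_\vw$ with $\E[\,\cdot\mid\vh_0]$ yields $\E[\partial_\vw\ell(\vh_{t_1})\mid\vh_0]=\partial_\vw\E[\ell(\vh_{t_1})\mid\vh_0]=\partial L/\partial\vw$, i.e.\ the estimator is unbiased; the swap is a dominated-convergence step, valid once $\vw\mapsto\vh_{t_1}$ is mean-square differentiable and $\partial\ell$ has at most linear growth, which bounds the difference quotients of $\ell(\vh_{t_1})$ uniformly in $L^1$. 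When $\mG\equiv\mzero$ there is no stochastic integral, $L=\ell(\vh_{t_1})$ is deterministic, and~\eqref{eq:adjoint-sde} collapses to the classical ODE sensitivity equation, so~\eqref{eq:mc-gradient} is the exact backpropagation gradient.

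The main obstacle is the step granted above: justifying that $\partial/\partial\vw$ passes through the It\^o integral, equivalently that the stochastic flow is mean-square differentiable in $\vw$ with derivative solving~\eqref{eq:adjoint-sde}. I would proceed the standard way: \emph{define} $\vbeta_t$ as the solution of~\eqref{eq:adjoint-sde}, form the remainder $\vr_t^{\epsilon}=\epsilon^{-1}\big(\vh_t^{\vw+\epsilon\ve_j}-\vh_t^{\vw}\big)-\vbeta_t$ for a unit direction $\ve_j$, write the SDE it satisfies using first-order Taylor expansions of $\vf$ and $\mG$ with integral remainders, bound $\E\sup_{s\le t}\|\vr_s^{\epsilon}\|^2$ via the Burkholder--Davis--Gundy and Cauchy--Schwarz inequalities, and close with Gronwall's lemma; the uniform moment bounds then force $\vr_{t_1}^{\epsilon}\to\mzero$ in $L^2$ as $\epsilon\to0$. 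This is exactly the path-wise sensitivity argument of~\cite{yang1991monte,gobet2005sensitivity}, which one could also simply invoke; everything else reduces to the two chain-rule computations above.
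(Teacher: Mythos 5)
Your outline is correct and is precisely the standard path-wise (first-variation) sensitivity argument that the paper itself does not spell out but simply delegates to the cited references \cite{yang1991monte,gobet2005sensitivity}: differentiate the integral form of \eqref{eq:neural-sde} to get the linear variational SDE \eqref{eq:adjoint-sde}, justify the interchange via the remainder estimate with Burkholder--Davis--Gundy plus Gronwall, and obtain unbiasedness of \eqref{eq:mc-gradient} by the chain rule and a dominated-convergence swap of $\partial_{\vw}$ with the conditional expectation. Your added regularity caveats (smoothness of $\vf,\mG$ in $(\vh,\vw)$ and growth control on $\partial\ell$) are exactly the hypotheses the theorem statement leaves implicit, so nothing is missing relative to the paper's treatment.
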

Similar to the adjoint method in Neural ODE, we will solve \eqref{eq:adjoint-sde} jointly with the original SDE dynamics~\eqref{eq:neural-sde}, this process can be done iteratively without memorizing the middle states, which makes it more memory efficient than autograd ($\mathcal{O}(1)$ \emph{v.s.} $\mathcal{O}(N)$ memory, $N$ is the number of steps determined by SDE solver). 
\subsection{Robustness of Neural SDE}
\label{sec:robustness}
In this section, we theoretically analyze the stability of Neural SDE, showing that the randomness term can indeed improve the robustness of the model against small input perturbation. This also explains why noise injection can improve the  robustness in discrete networks, which has been observed in literature~\cite{liu2018towards,lecuyer2018certified}. First we need to show the existence and uniqueness of solution to \eqref{eq:neural-sde}, we pose following assumptions on drift $\vf$ and diffusion $\mG$.
\begin{assumption}\label{ass:sublinear}
$\vf$ and $\mG$ are at most linear, i.e. $\|\vf(\vx,t)\|+\|\mG(\vx,t)\|\le c_1(1+\|\vx\|)$ for $c_1>0$, $\forall \vx\in\sR^n$ and $t\in \sR^+$.
\end{assumption}

\begin{assumption}\label{ass:lipschitz}
$\vf$ and $\mG$ are $c_2$-Lipschitz: $\|\vf(\vx,t)-\vf(\vy,t)\|+\|\mG(\vx,t)-\mG(\vy,t)\|\le c_2\|\vx-\vy\|$ for $c_2>0$, $\forall \vx,\vy\in\sR^n$ and $t\in\sR^+$.
\end{assumption}
Based on the above assumptions, we can show that the SDE~\eqref{eq:neural-sde} has a unique solution~\cite{oksendal2003stochastic}. We remark that assumption on $\vf$ is quite natural and is also enforced on the original Neural ODE model~\cite{chen2018neural}; as to diffusion matrix $\mG$, we have seen that for dropout, Gaussian noise injection and other random models, both assumptions are automatically satisfied as long as $\vf$ possesses the same regularities.

We analyze the dynamics of perturbation. Our analysis applies not only to the Neural SDE model but also to Neural ODE model, by setting the diffusion term $\mG$ to zero. First of all, we consider initializing our Neural SDE~\eqref{eq:neural-sde} at two slightly different values $\vh_0$ and $\vh_0^{e}=\vh_0+\veps_0$, where $\veps_0$ is the perturbation for $\vh_0$ with  $\|\veps_0\|\le \delta$. So, under the new perturbed initialization $\vh_0^e$, the hidden state at time $t$ follows the same SDE in \eqref{eq:neural-sde}, 
\begin{equation}
    \label{eq:neural-sde-new-init}
    \dif \vh_t^e=\vf(\vh_t^e,t;\vw)\dif t+\mG(\vh_t^e,t;\vv)\dif \rvB'_t, \quad\text{with } \vh_0^e=\vh_0+\veps_0, 
\end{equation}
where $\rvB'_t$ is Brownian motions for the SDE associated with initialization $\vh_0^e$. Then it is natural to analyze how the perturbation $\veps_t=\vh_t^e-\vh_t$ evolves in the long run. Subtracting \eqref{eq:neural-sde} from \eqref{eq:neural-sde-new-init} 
\begin{equation}
    \label{eq:error-SDE}
    \begin{aligned}
    \dif \veps_t&=\big[\vf(\vh_t^e,t;\vw)-\vf(\vh_t,t;\vw)\big]\dif t+\big[\mG(\vh_t^e,t;\vv)-\mG(\vh_t,t;\vv)\big]\dif \rvB_t\\
    &= \vf_{\Delta}(\veps_t,t;\vw)\dif t+\mG_{\Delta}(\veps_t,t;\vv)\dif \rvB_t.
    \end{aligned}
\end{equation}
Here we made an implicit assumption that the Brownian motions $\rvB_t$ and $\rvB'_t$ have the \emph{same} sample path for both initialization $\vh_0$ and $\vh_0^e$, i.e. $\rvB_t=\rvB'_t$ w.p.1. In other words, we focus on the difference of two random processes $\vh_t$ and $\vh_t^e$ driven by the same underlying Brownian motion. So it is valid to subtract the diffusion terms.
\par
An important property of \eqref{eq:error-SDE} is that it admits a trivial solution $\veps_t\equiv \vzero$, $\forall t\in\sR^+$ and $\vw\in\sR^d$. We show that both the drift ($\vf$) and diffusion ($\mG$) are zero under this solution:
\begin{equation}
    \label{eq:zero-drift-diffusion}
    \begin{aligned}
    \vf_{\Delta}(\vzero,t;\vw)&=\vf(\vh_t+\vzero,t;\vw)-\vf(\vh_t,t;\vw)=0,\\
    \mG_{\Delta}(\vzero,t;\vv)&=\mG(\vh_t+\vzero,t;\vw)-\mG(\vh_t,t;\vw)=0.
    \end{aligned}
\end{equation}
The implication of zero solution is clear: for a neural network, if we do not perturb the input data, then the output will never change. However, the solution $\veps_t=\vzero$ can be \emph{highly unstable}, in the sense that for an arbitrarily small perturbation $\veps_0\ne\vzero$ at initialization, the change of output $\veps_T$ can be arbitrarily bad. On the other hand, as shown below, by choosing the diffusion term $\mG$ properly, we can always control $\veps_t$ within a small range. 
\par
In general, we cannot get the closed form solution to a multidimensional SDE but we can still analyze the asymptotic stability through the dynamics $\vf$ and $\mG$. This is essentially an extension of Lyapunov stability theory to a  stochastic system. First we define the notion of stability in the stochastic case. Let $(\Omega, \gF ,P)$ be a complete probability space with filtration $\{\gF_t\}_{t\geq0}$ and  $\rmB_t$ be an $m$-dimensional Brownian motion defined in the probability space, we consider the SDE in~Eq.~\eqref{eq:error-SDE} with initial value $\veps_0$
\begin{equation}
\label{eq:sde}
    \dif \veps_t=\vf_{\Delta}(\veps_t,t)\dif t + \mG_{\Delta}(\veps_t, t)\dif \rvB_t,
\end{equation}
For simplicity we dropped the dependency on parameters $\vw$ and $\vv$. We further assume $\vf_{\Delta}:\sR^n\times\sR_+ \mapsto \sR^n$ and $\mG_{\Delta}: \sR^n\times\sR_+ \mapsto \sR^{n\times m}$ are both Borel measurable. We can show that if assumptions (\ref{ass:sublinear}) and (\ref{ass:lipschitz}) hold for $\vf$ and $\mG$, then they hold for $\vf_{\Delta}$ and $\mG_{\Delta}$ as well (see Lemma \ref{lem:ass-for-fG-Delta} in Appendix), and we know the SDE~\eqref{eq:sde} allows a unique solution $\veps_t$. We have the following Lynapunov stability results from \cite{mao2007stochastic}.
\begin{definition}[Lyapunov stability of SDE]
\label{def:lyapunov}
The solution $\veps_t=\vzero$ of \eqref{eq:sde}:
\begin{itemize}[noitemsep,leftmargin=*]
    \item[A.] is \underline{stochastically stable} if for any $\alpha\in (0,1)$ and $r> 0$, there exists a $\delta=\delta(\alpha, r)>0$ such that $\Pr\{\| \veps_t\| < r \text{ for all } t\geq 0\}\geq 1-\alpha$ whenever $\| \veps_0\|\leq \delta$. Moreover, if for any $\alpha\in (0,1)$, there exists a $\delta=\delta(\alpha)>0$ such that $\Pr\{\lim_{t\rightarrow \infty}\Vert \veps_t \Vert =0 \}\geq 1-\alpha$ whenever $\Vert \veps_0\Vert\leq \delta$, it is said to be \underline{stochastically asymptotically stable}; 
    \item[B.] is \underline{almost surely exponentially stable} if $\underset{t\rightarrow\infty}{\limsup} \frac{1}{t} \log \Vert \veps_t\Vert < 0$ a.s.\footnote{``a.s.'' is the abbreviation for ``almost surely''.} for all $\veps_0\in\sR^n$.
\end{itemize}
\end{definition}

Note that for part A in Definition~\ref{def:lyapunov}, it is hard to quantify how well the stability is and how fast the solution reaches equilibrium. In addition, under assumptions (\ref{ass:sublinear}, \ref{ass:lipschitz}), we have a straightforward result $\Pr\{\veps_t\neq \vzero \text{ for all } t\geq 0\}=1$ whenever $\veps_0 \neq \vzero$ as shown in Appendix (see Lemma \ref{lem:nonzero}). That is, almost all the sample paths starting from a non-zero initialization can never reach zero due to Brownian motion. On the contrary, the almost sure exponentially stability result implies that almost all the sample paths of the solution will be close to zero exponentially fast. We present the following theorem from~\cite{mao2007stochastic} on the almost sure exponentially stability.

\begin{theorem}
    \label{th:exp-stable}
  \textsc{\cite{mao2007stochastic}}  If there exists a non-negative real valued function $V(\veps ,t)$ defined on $\sR^n\times\sR_+$ that has continuous partial derivatives $$V_{1}(\veps,t)\coloneqq \frac{\partial V(\veps,t)}{\partial \veps}, V_{2}(\veps,t)\coloneqq \frac{\partial V(\veps,t)}{\partial t}, V_{1,1}(\veps,t)\coloneqq \frac{\partial^2 V(\veps,t)}{\partial \veps\partial \veps^\top}$$
  and constants $p>0, c_1>0, c_2\in\sR, c_3\geq 0$ such that the following inequalities hold:
    \begin{enumerate}[noitemsep,leftmargin=*]
    \item $c_1\Vert \veps \Vert^p \leq V(\veps, t)$
    \item $\mathcal{L}V(\veps,t)= V_2 (\veps,t)+  V_1(\veps,t) \vf_{\Delta}(\veps,t) + \frac{1}{2}\mathsf{Tr}[\mG_{\Delta}^\top(\veps,t)V_{1,1}(\veps,t)\mG_{\Delta}(\veps,t)]\leq c_2 V(\veps,t)$
    \item $\Vert V_{1}(\veps,t) \mG_{\Delta}(\veps,t) \Vert^2 \geq c_3 V^2(\veps,t)$
    \end{enumerate}
    for all $\veps\neq \vzero$ and $t>0$. Then for all $\veps_0\in \sR^n$,
    \begin{equation}
    \label{eq:exp-stable}
    \underset{t\rightarrow\infty}{\limsup} \frac{1}{t}\log \Vert \veps_t \Vert \leq -\frac{c_3-2c_2}{2p} \quad a.s.
    \end{equation}
 In particular, if $c_3\geq 2c_2$, the solution $\veps_t\equiv \vzero$ is almost surely exponentially stable.
\end{theorem}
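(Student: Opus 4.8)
The plan is to follow the classical Lyapunov-function route for almost sure exponential stability: apply It\^o's formula to $\log V(\veps_t,t)$ along the solution of \eqref{eq:sde}, use conditions 2 and 3 to extract linear-in-$t$ decay, and then control the residual stochastic integral by the exponential martingale inequality.

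First I would dispose of the trivial case $\veps_0=\vzero$, where $\veps_t\equiv\vzero$ and \eqref{eq:exp-stable} holds vacuously. For $\veps_0\neq\vzero$, Lemma~\ref{lem:nonzero} guarantees $\veps_t\neq\vzero$ for all $t\geq0$ almost surely, so by condition 1 we have $V(\veps_t,t)\geq c_1\|\veps_t\|^p>0$ and $\log V(\veps_t,t)$ is well defined on $[0,\infty)$. Since $V\in C^{2,1}$, It\^o's formula gives $\dif V(\veps_t,t)=\mathcal{L}V(\veps_t,t)\dif t+V_1(\veps_t,t)\mG_{\Delta}(\veps_t,t)\dif\rvB_t$, and a second application of It\^o's formula to $\log(\cdot)$ yields
\begin{equation*}
\log V(\veps_t,t)=\log V(\veps_0,0)+\int_0^t\frac{\mathcal{L}V(\veps_s,s)}{V(\veps_s,s)}\dif s-\frac12\int_0^t\frac{\|V_1(\veps_s,s)\mG_{\Delta}(\veps_s,s)\|^2}{V(\veps_s,s)^2}\dif s+M_t,
\end{equation*}
where $M_t=\int_0^t V(\veps_s,s)^{-1}V_1(\veps_s,s)\mG_{\Delta}(\veps_s,s)\dif\rvB_s$ is a continuous local martingale with quadratic variation $\langle M\rangle_t=\int_0^t V(\veps_s,s)^{-2}\|V_1(\veps_s,s)\mG_{\Delta}(\veps_s,s)\|^2\dif s$. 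Conditions 2 and 3 then give the pathwise estimates $\log V(\veps_t,t)\leq\log V(\veps_0,0)+c_2t-\tfrac12\langle M\rangle_t+M_t$ and $\langle M\rangle_t\geq c_3 t$.

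The heart of the argument is to show the martingale term does not destroy the linear decay. For any $\theta\in(0,1)$ and integer $n\geq1$, the exponential martingale inequality gives $\Pr\{\sup_{0\le t\le n}(M_t-\tfrac{\theta}{2}\langle M\rangle_t)>\tfrac{2}{\theta}\log n\}\le n^{-2}$, so by Borel--Cantelli there is a random $n_0(\omega)<\infty$ with $M_t\leq\tfrac{\theta}{2}\langle M\rangle_t+\tfrac{2}{\theta}\log n$ for all $n\geq n_0$ and all $t\in[0,n]$. Substituting this and $\langle M\rangle_t\geq c_3t$ into the bound for $\log V$ gives, for $n-1\le t\le n$ with $n\ge n_0(\omega)$,
\begin{equation*}
\log V(\veps_t,t)\leq\log V(\veps_0,0)+c_2t-\frac{(1-\theta)c_3}{2}\,t+\frac{2}{\theta}\log n.
\end{equation*}
Dividing by $t$, sending $t\to\infty$ (so $t^{-1}\to0$ and $t^{-1}\log n\to0$), and then letting $\theta\downarrow0$ gives $\limsup_{t\to\infty}\tfrac1t\log V(\veps_t,t)\leq c_2-\tfrac{c_3}{2}$ a.s. Finally condition 1 yields $\log\|\veps_t\|\leq\tfrac1p(\log V(\veps_t,t)-\log c_1)$, so dividing by $t$ and taking $\limsup$ produces \eqref{eq:exp-stable}; when $c_3\geq2c_2$ the right-hand side is nonpositive, giving the asserted exponential stability.

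I expect the main obstacle to be the third step: justifying rigorously that $M_t$ is a continuous local martingale to which the exponential inequality applies, and stitching the Borel--Cantelli estimate at integer times into a bound valid for all $t$. This is also where one must use $\veps_t\neq\vzero$ a.s.\ (Lemma~\ref{lem:nonzero}) together with the continuity of $V$ and $V_1$ to ensure $V(\veps_s,s)$ stays bounded away from $0$ on each compact time interval along a path, so that the integrands defining $M_t$ and $\langle M\rangle_t$ are finite and the stochastic calculus above is legitimate.
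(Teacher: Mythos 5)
Your proposal is correct and follows essentially the same route as the paper's proof: It\^o's formula applied to $\log V(\veps_t,t)$ (after invoking Lemma~\ref{lem:nonzero} so that $\log V$ is well defined), the exponential martingale inequality combined with Borel--Cantelli at integer times, substitution of conditions 2 and 3, and finally condition 1 together with the arbitrariness of your $\theta$ (the paper's $\alpha$) to reach \eqref{eq:exp-stable}. The only differences are cosmetic, e.g.\ you phrase the condition-3 bound as $\langle M\rangle_t\geq c_3t$ before substituting, while the paper inserts it after the martingale estimate.
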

We now consider a special case, when the noise is multiplicative $\mG(\vh_t, t) = \sigma\cdot\vh_t $ and $m=1$. The corresponding SDE of perturbation $\veps_t=\vh_t^e-\vh_t$ has the following form
\begin{equation}
    \label{eq:multiplicative-error-SDE}
    \dif \veps_t=\vf_{\Delta}(\veps_t,t;\vw)\dif t+ \sigma\cdot\veps_t  \dif \rvB_t.
\end{equation}
Note that for the deterministic case of \eqref{eq:multiplicative-error-SDE} by setting $\sigma\equiv 0$, the solution may not be stable in certain cases (see Figure \ref{fig:toy_example}). Whereas for general cases when $\sigma>0$, following corollary claims that by setting $\sigma$ properly, we will achieve an (almost surely) exponentially stable system.

\begin{corollary}
    \label{cor:stabilize}
    For \eqref{eq:multiplicative-error-SDE}, if $\vf(\vh_t, t; \vw)$ is $L$-Lipschtiz continuous w.r.t. $\vh_t$, then \eqref{eq:multiplicative-error-SDE} has a unique solution with the property
    $
        \underset{t\rightarrow\infty}{\limsup} \frac{1}{t} \log \Vert \veps_t \Vert \leq -(\frac{\sigma^2}{2} - L)
    $ almost surely for any $\veps_0\in\sR^n$. In particular, if $\sigma^2 > 2L$, the solution $\veps_t=\vzero$ is almost surely exponentially stable.
\end{corollary}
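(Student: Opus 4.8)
The plan is to apply Theorem~\ref{th:exp-stable} to the error SDE~\eqref{eq:multiplicative-error-SDE} with the quadratic Lyapunov function $V(\veps,t)=\|\veps\|^2$ and exponent $p=2$. Before doing so I would dispose of existence and uniqueness: here $\mG_\Delta(\veps,t)=\mG(\vh_t+\veps,t)-\mG(\vh_t,t)=\sigma\veps$ is globally Lipschitz and of linear growth, while the $L$-Lipschitz hypothesis on $\vf$ gives $\|\vf_\Delta(\veps,t)\|=\|\vf(\vh_t+\veps,t)-\vf(\vh_t,t)\|\le L\|\veps\|$, so $\vf_\Delta$ also satisfies Assumptions~\ref{ass:sublinear} and~\ref{ass:lipschitz} (cf. Lemma~\ref{lem:ass-for-fG-Delta}); hence \eqref{eq:multiplicative-error-SDE} admits a unique solution.

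Next I would verify the three hypotheses of Theorem~\ref{th:exp-stable}. Condition~1 holds with $c_1=1$. For condition~2 I compute $V_1(\veps,t)=2\veps^\top$, $V_2(\veps,t)=0$, $V_{1,1}(\veps,t)=2\mI$, so that
\[
\mathcal{L}V(\veps,t)=2\veps^\top\vf_\Delta(\veps,t)+\tfrac12\Tr\!\big[(\sigma\veps)^\top(2\mI)(\sigma\veps)\big]=2\veps^\top\vf_\Delta(\veps,t)+\sigma^2\|\veps\|^2 .
\]
By Cauchy--Schwarz and the Lipschitz bound, $\veps^\top\vf_\Delta(\veps,t)\le\|\veps\|\,\|\vf_\Delta(\veps,t)\|\le L\|\veps\|^2$, hence $\mathcal{L}V(\veps,t)\le(2L+\sigma^2)V(\veps,t)$ and we take $c_2=2L+\sigma^2$. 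For condition~3, $V_1(\veps,t)\mG_\Delta(\veps,t)=2\veps^\top(\sigma\veps)=2\sigma\|\veps\|^2$, so $\|V_1(\veps,t)\mG_\Delta(\veps,t)\|^2=4\sigma^2\|\veps\|^4=4\sigma^2V^2(\veps,t)$, i.e. $c_3=4\sigma^2$ (with equality).

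Substituting these constants into the conclusion~\eqref{eq:exp-stable} of Theorem~\ref{th:exp-stable} gives
\[
\limsup_{t\to\infty}\tfrac1t\log\|\veps_t\|\le-\frac{c_3-2c_2}{2p}=-\frac{4\sigma^2-2(2L+\sigma^2)}{4}=-\Big(\frac{\sigma^2}{2}-L\Big)\quad a.s.,
\]
which is the stated rate; and the condition $\sigma^2>2L$ is exactly $c_3>2c_2$, so Theorem~\ref{th:exp-stable} yields almost sure exponential stability of $\veps_t\equiv\vzero$.

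I do not expect a genuine obstacle; the argument is essentially a substitution once $V=\|\veps\|^2$ is chosen. The only points that need care are (i) the drift is only controlled \emph{one-sidedly} --- the Lipschitz hypothesis gives $\veps^\top\vf_\Delta(\veps,t)\le L\|\veps\|^2$ but no matching lower bound --- which is harmless because condition~2 of Theorem~\ref{th:exp-stable} requires precisely this direction, and (ii) one must keep the equality in condition~3 (rather than a loose inequality) in order to extract the sharp exponent $-(\sigma^2/2-L)$.
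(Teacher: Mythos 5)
Your proposal is correct and follows essentially the same route as the paper's own proof: verify that $\vf_\Delta$ and $\mG_\Delta$ inherit the Lipschitz/linear-growth conditions (giving uniqueness), then apply Theorem~\ref{th:exp-stable} with $V(\veps,t)=\|\veps\|^2$, $p=2$, $c_1=1$, $c_2=2L+\sigma^2$, $c_3=4\sigma^2$, yielding the rate $-(\sigma^2/2-L)$. Your explicit computation of $V_1$, $V_{1,1}$ and the remark on keeping equality in condition~3 are slightly more detailed than the paper's write-up, but the argument is the same.
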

\section{Experimental Results}
In this section we show the effectiveness of our Neural SDE framework in terms of generalization, non-adversarial robustness and adversarial robustness. We use the SDE model architecture illustrated in Figure \ref{fig:model_architecture} during the experiment.
Throughout our experiments, we set $\vf(\cdot)$ to be a neural network with several convolution blocks. As to $\mG(\cdot)$ we have the following choices:
\begin{itemize}[noitemsep,leftmargin=*]
    \item \textbf{Neural ODE}, this can be done by dropping the diffusion term $\mG(\vh_t,t;\vv)=\mzero$.
    \item \textbf{Additive noise}, when the diffusion term is independent of $\vh_t$, here we simply set it to be diagonal  $\mG(\vh_t,t;\vv)=\sigma_t\mI$.
    \item \textbf{Multiplicative noise}, when the diffusion term is proportional to $\vh_t$, or $\mG(\vh_t,t;\vw)=\sigma_t\vh_t$.
    \item \textbf{Dropout noise}, when the diffusion term is proportional to the drift term $\vf(\vh_t,t;\vw)$, i.e. $\mG(\vh_t,t;\vv)=\sigma_t\text{diag}\{\vf(\vh_t,t;\vw)\}$.
\end{itemize}
Note the last three are our proposed Neural SDE with different types of randomness as explained in Section 3.1.
\subsection{Generalization Performance}
In the first experiment, we show small noise helps generalization. However, note that our noise injection is different from randomness layer in the discrete case, for instance, dropout layer adds Bernoulli noise at training time, but the layer are then fixed at testing time; whereas our Neural SDE model keeps randomness at testing time and takes the average of multiple forward propagation.
\par
As for datasets, we choose CIFAR-10, STL-10 and Tiny-ImageNet\footnote{Downloaded from \url{https://tiny-imagenet.herokuapp.com/}} to include various sizes and number of classes. The experimental results are shown in Table~\ref{tab:generalization-result}. We see that for all datasets, Neural SDE  consistently outperforms ODE, and the reason is that adding moderate noise to the models at training time can act as a regularizer and thus improves testing accuracy. Based upon that, if we further keep testing time noise and ensemble the outputs, we will obtain even better results. 

\begin{table}[htb]
\centering
\caption{\label{tab:generalization-result}Evaluating the model generalization under different choices of diffusion matrix $\mG(\vh_t,t;\vv)$ introduced above. For the last three noise types, we search a suitable parameter $\sigma_t$ for each so that the diffusion matrix $\mG$ properly regularizes the model. TTN means testing time noise. Model size is counted by \#parameters. }
\scalebox{0.75}{
\begin{threeparttable}
    \begin{tabular}{cccccccccc}
    \toprule
    \multirow{2}{*}{Data}  & \multirow{2}{*}{Model size}   &  \multicolumn{4}{c}{Accuracy@1 --- w/o TTN} &  \multicolumn{4}{c}{Accuracy@1 --- w/ TTN}\\
    \cmidrule(lr){3-6}  \cmidrule(lr){7-10}
   & & ODE & Additive & Multiplicative & Dropout & ODE & Additive & Multiplicative & Dropout\\
    \midrule
    CIFAR-10 & 115~K &  81.63 & 83.65 & 83.26 & 83.60 &  -- & 83.89 & 83.76 & \textbf{84.55}\\
    \midrule
    STL-10 & 2.44~M & 58.03 & 61.23 & 60.54 & 61.26 &  -- & 62.11 & \textbf{62.58} & 62.13\\
    \midrule
    Tiny-ImageNet & 2.49~M & 45.19 & 45.25 & 46.94 & 47.04 &  -- & 45.39 & 46.65 & \textbf{47.81}\\
    \bottomrule
    \end{tabular}
\end{threeparttable}
}
\end{table}
\subsection{Improved non-adversarial robustness}
In this experiment, we aim at evaluating the robustness of models under non-adversarial corruptions following the idea of \cite{hendrycks2019benchmarking}. The corrupted datasets contain tens of defects in photography including motion blur, Gaussian noise, fog etc.  For each noise type, we run Neural ODE and Neural SDE with dropout noise, and gather the testing accuracy. The final results are reported by mean accuracy (mAcc) in Table \ref{tab:non-adv-result} by changing the level of corruption. Both models are trained on completely clean data, which means the corrupted images are not visible to them during the training stage, nor could they augment the training set with the same types of corruptions. From the table, we can see that Neural SDE performs better than Neural ODE in 8 out of 10 cases.
For the rest two, both ODE and SDE are performing very close. 
This shows that our proposed Neural SDE can improve the robustness of Neural ODE under non-adversarial corrupted data.
\begin{table}[htb]
\centering
\caption{\label{tab:non-adv-result}Testing accuracy results under different levels of non-adversarial perturbations.}
\scalebox{0.8}{
\begin{threeparttable}
    \begin{tabular}{ccccccc}
    \toprule
    \multirow{2}{*}{Data} & \multirow{2}{*}{Noise type} & \multicolumn{5}{c}{mild corrupt $\leftarrow$ Accuracy $\rightarrow$ severe corrupt}\\
    \cmidrule(lr){3-7}
    & &  Level 1 & Level 2 & Level 3 & Level 4 & Level 5\\
    \midrule
    \multirow{3}{*}{CIFAR10-C\tnote{\textdagger}} & ODE & 75.89 & 70.59 & 66.52 & 60.91 & 53.02 \\
    & Dropout & 77.02 & 71.58 & 67.21 & 61.61 & 53.81 \\
    & Dropout+TTN & \textbf{79.07} & \textbf{73.98} & \textbf{69.74} & \textbf{64.19} & \textbf{55.99} \\
    \midrule
    \multirow{3}{*}{TinyImageNet-C\tnote{\textdagger}} & ODE & 23.01 & 19.18 & 15.20 & \textbf{12.20} & \textbf{9.88} \\
    & Dropout & 22.85 & 18.94 & 14.64 & 11.54 & 9.09 \\
    & Dropout+TTN & \textbf{23.84} & \textbf{19.89} & \textbf{15.28} & 12.08 & 9.44\\
    \bottomrule
    \end{tabular}
    \begin{tablenotes}
    \item[\textdagger] Downloaded from \url{https://github.com/hendrycks/robustness}
\end{tablenotes}
\end{threeparttable}}
\end{table}
\subsection{Improved adversarial robustness}
\begin{figure}[htb]
    \centering
    \includegraphics[width=0.8\linewidth]{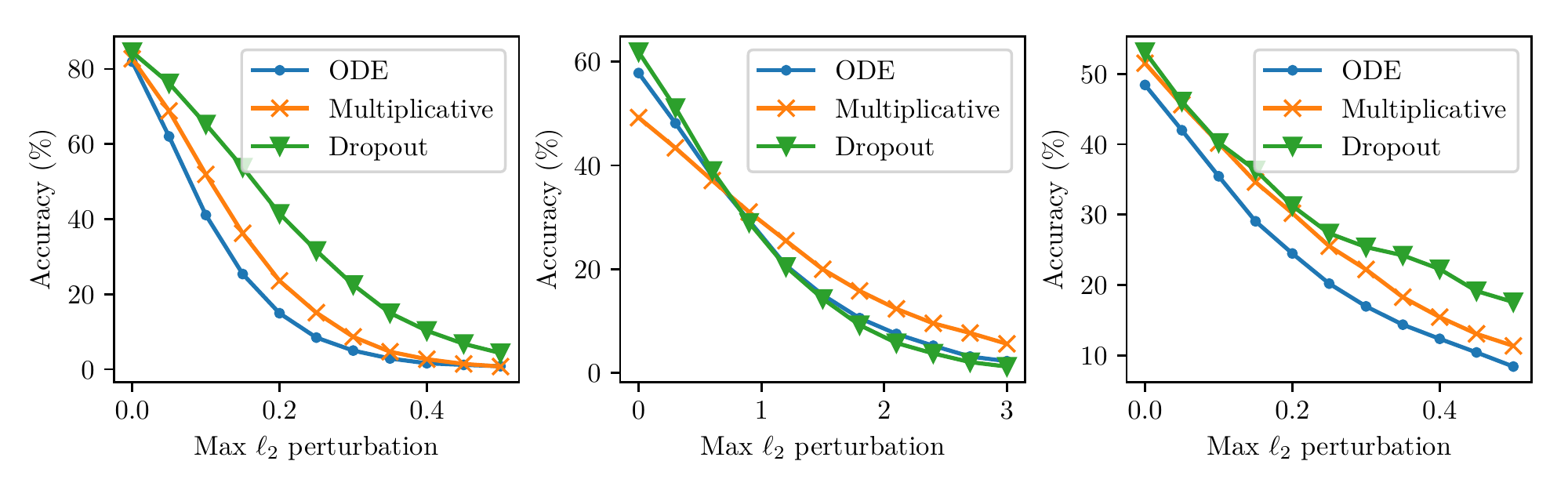}
    \caption{Comparing the robustness against $\ell_2$-norm constrained adversarial perturbations, on CIFAR-10 (left), STL-10 (middle) and Tiny-ImageNet (right) data. We evaluate testing accuracy with three models, namely Neural ODE, Neural SDE with multiplicative noise and dropout noise.}
    \label{fig:cmp-adv-acc}
\end{figure}
Next, we consider the performance of Neural SDE models under adversarial perturbation. Clearly, this scenario is strictly harder than previous cases: by design, the adversarial perturbations are guaranteed to be the worst case within a small neighborhood (ignoring the suboptimality of optimization algorithms) crafted through constrained loss maximization procedure, so it represents the worst case performance. In our experiment, we adopt multi-step $\ell_{\infty}$-PGD attack~\cite{madry2017towards}, although other strong white-box attacks such as C\&W~\cite{carlini2017towards} are also suitable. The experimental results are shown in Figure~\ref{fig:cmp-adv-acc}. As we can see both Neural SDE with multiplicative noise and dropout noise are more resistant to adversarial attack than Neural ODE, and dropout noise outperforms multiplicative noise.
\subsection{Visualizing the perturbations of hidden states}
\begin{wrapfigure}{r}{0.4\textwidth}
\begin{center}
    \includegraphics[width=0.39\textwidth]{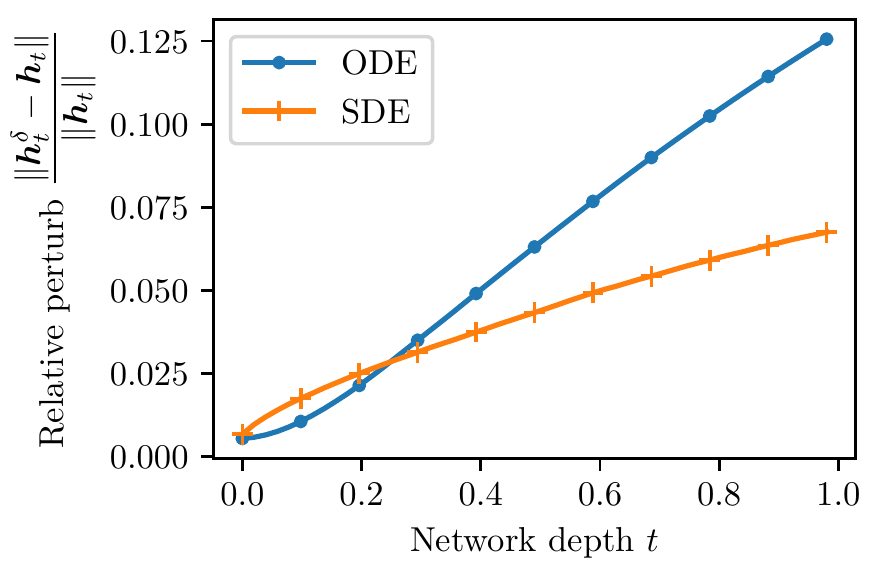}
\end{center}
\caption{\label{fig:visualize_perturb}Comparing the perturbations of hidden states, $\veps_t$, on both ODE and SDE (we choose dropout-style noise).}
\end{wrapfigure}
In this experiment, we take a look at the perturbation $\veps_t=\vh_t^e-\vh_t$ at any time $t$. Recall the 1-d toy example in Figure~\ref{fig:toy_example}, we can observe that the perturbation at time $t$ can be well suppressed by adding a strong diffusion term, which is also confirmed by theorem. However, it is still questionable whether the same phenomenon also exists in deep neural network since we cannot add very large noise to the network during training or testing time. 
If the noise is too large, it will also remove all useful features. Thus it becomes important to make sure that this will not happen to our models. To this end we first sample an input $\vx$ from CIFAR-10 and gather all the hidden states $\vh_t$ at time $t=[0, \Delta t, 2\Delta t,\dots, N\Delta t]$. Then we perform regular PGD attack~\cite{madry2017towards} and find the perturbation $\bm{\delta}_x$ such that $\vx_{\mathrm{adv}}=\vx+\bm{\delta}_x$ is an adversarial image, and feed the new data $\vx_{\mathrm{adv}}$ into network again so we get $\vh_t^e$ at the same time stamps as $\vh_t$. Finally we plot the error $\veps_t=\vh_t^e-\vh_t$ \textit{w.r.t.} time $t$ (also called ``network depth''), shown in Figure~\ref{fig:visualize_perturb}. We can observe that by adding a diffusion term (dropout-style noise), the error accumulates much slower than ordinary Neural ODE model.

\section{Conclusion}
To conclude, we introduce the Neural SDE model which can stabilize the prediction of Neural ODE by injecting stochastic noise. 
Our model can achieve better generalization and improve the robustness to both adversarial and non-adversarial noises. 

\section*{Acknowledgement}
We acknowledge the support by NSF IIS1719097, Intel, Google Cloud and AITRICS.

\bibliography{ref}

\begin{thebibliography}{10}

\bibitem{he2016deep}
Kaiming He, Xiangyu Zhang, Shaoqing Ren, and Jian Sun.
\newblock Deep residual learning for image recognition.
\newblock In {\em Proceedings of the IEEE conference on computer vision and
  pattern recognition}, pages 770--778, 2016.

\bibitem{chen2018neural}
Tian~Qi Chen, Yulia Rubanova, Jesse Bettencourt, and David~K Duvenaud.
\newblock Neural ordinary differential equations.
\newblock In {\em Advances in Neural Information Processing Systems}, pages
  6572--6583, 2018.

\bibitem{srivastava2014dropout}
Nitish Srivastava, Geoffrey Hinton, Alex Krizhevsky, Ilya Sutskever, and Ruslan
  Salakhutdinov.
\newblock Dropout: a simple way to prevent neural networks from overfitting.
\newblock {\em The Journal of Machine Learning Research}, 15(1):1929--1958,
  2014.

\bibitem{bishop1995training}
Chris~M Bishop.
\newblock Training with noise is equivalent to tikhonov regularization.
\newblock {\em Neural computation}, 7(1):108--116, 1995.

\bibitem{an1996effects}
Guozhong An.
\newblock The effects of adding noise during backpropagation training on a
  generalization performance.
\newblock {\em Neural computation}, 8(3):643--674, 1996.

\bibitem{liu2018towards}
Xuanqing Liu, Minhao Cheng, Huan Zhang, and Cho-Jui Hsieh.
\newblock Towards robust neural networks via random self-ensemble.
\newblock In {\em Proceedings of the European Conference on Computer Vision
  (ECCV)}, pages 369--385, 2018.

\bibitem{lecuyer2018certified}
Mathias Lecuyer, Vaggelis Atlidakis, Roxana Geambasu, Daniel Hsu, and Suman
  Jana.
\newblock Certified robustness to adversarial examples with differential
  privacy.
\newblock {\em arXiv preprint arXiv:1802.03471}, 2018.

\bibitem{weinan2017proposal}
Weinan E.
\newblock A proposal on machine learning via dynamical systems.
\newblock {\em Communications in Mathematics and Statistics}, 5(1):1--11, 2017.

\bibitem{lu2018beyond}
Yiping Lu, Aoxiao Zhong, Quanzheng Li, and Bin Dong.
\newblock Beyond finite layer neural networks: Bridging deep architectures and
  numerical differential equations.
\newblock In {\em International Conference on Machine Learning}, pages
  3282--3291, 2018.

\bibitem{ardizzone2018analyzing}
Lynton Ardizzone, Jakob Kruse, Sebastian Wirkert, Daniel Rahner, Eric~W
  Pellegrini, Ralf~S Klessen, Lena Maier-Hein, Carsten Rother, and Ullrich
  K{\"o}the.
\newblock Analyzing inverse problems with invertible neural networks.
\newblock {\em arXiv preprint arXiv:1808.04730}, 2018.

\bibitem{grathwohl2018ffjord}
Will Grathwohl, Ricky~TQ Chen, Jesse Betterncourt, Ilya Sutskever, and David
  Duvenaud.
\newblock Ffjord: Free-form continuous dynamics for scalable reversible
  generative models.
\newblock {\em arXiv preprint arXiv:1810.01367}, 2018.

\bibitem{sun2018stochastic}
Qi~Sun, Yunzhe Tao, and Qiang Du.
\newblock Stochastic training of residual networks: a differential equation
  viewpoint.
\newblock {\em arXiv preprint arXiv:1812.00174}, 2018.

\bibitem{raissi2018forward}
Maziar Raissi.
\newblock Forward-backward stochastic neural networks: Deep learning of
  high-dimensional partial differential equations.
\newblock {\em arXiv preprint arXiv:1804.07010}, 2018.

\bibitem{huang2016deep}
Gao Huang, Yu~Sun, Zhuang Liu, Daniel Sedra, and Kilian~Q Weinberger.
\newblock Deep networks with stochastic depth.
\newblock In {\em European conference on computer vision}, pages 646--661.
  Springer, 2016.

\bibitem{gastaldi2017shake}
Xavier Gastaldi.
\newblock Shake-shake regularization.
\newblock {\em arXiv preprint arXiv:1705.07485}, 2017.

\bibitem{ghiasi2018dropblock}
Golnaz Ghiasi, Tsung-Yi Lin, and Quoc~V Le.
\newblock Dropblock: A regularization method for convolutional networks.
\newblock In {\em Advances in Neural Information Processing Systems}, pages
  10727--10737, 2018.

\bibitem{oksendal2003stochastic}
Bernt {\O}ksendal.
\newblock Stochastic differential equations.
\newblock In {\em Stochastic differential equations}, pages 65--84. Springer,
  2003.

\bibitem{yang1991monte}
Jichuan Yang and Harold~J Kushner.
\newblock A monte carlo method for sensitivity analysis and parametric
  optimization of nonlinear stochastic systems.
\newblock {\em SIAM journal on control and optimization}, 29(5):1216--1249,
  1991.

\bibitem{gobet2005sensitivity}
Emmanuel Gobet and R{\'e}mi Munos.
\newblock Sensitivity analysis using it{\^o}--malliavin calculus and
  martingales, and application to stochastic optimal control.
\newblock {\em SIAM Journal on control and optimization}, 43(5):1676--1713,
  2005.

\bibitem{mao2007stochastic}
Xuerong Mao.
\newblock {\em Stochastic differential equations and applications}.
\newblock Elsevier, 2007.

\bibitem{hendrycks2019benchmarking}
Dan Hendrycks and Thomas Dietterich.
\newblock Benchmarking neural network robustness to common corruptions and
  perturbations.
\newblock {\em arXiv preprint arXiv:1903.12261}, 2019.

\bibitem{madry2017towards}
Aleksander Madry, Aleksandar Makelov, Ludwig Schmidt, Dimitris Tsipras, and
  Adrian Vladu.
\newblock Towards deep learning models resistant to adversarial attacks.
\newblock {\em arXiv preprint arXiv:1706.06083}, 2017.

\bibitem{carlini2017towards}
Nicholas Carlini and David Wagner.
\newblock Towards evaluating the robustness of neural networks.
\newblock In {\em 2017 IEEE Symposium on Security and Privacy (SP)}, pages
  39--57. IEEE, 2017.

\end{thebibliography}
\bibliographystyle{unsrt}

\appendix

\newpage


\section{Proofs}

We present the proofs of theorems on stability of SDE. The proofs are adapted from \cite{mao2007stochastic}. We start with two crucial lemmas.

\begin{lemma}
\label{lem:ass-for-fG-Delta}
If $\vf, \mG$ satisfy Assumption (\ref{ass:lipschitz}), then $\vf_\Delta, \mG_\Delta$ satisfy Assumption (\ref{ass:sublinear},\ref{ass:lipschitz}).
\end{lemma}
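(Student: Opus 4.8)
The plan is to exploit the fact that $\vf_\Delta$ and $\mG_\Delta$ are, by construction, \emph{increments} of $\vf$ and $\mG$ along the fixed background path $\vh_t$: recall $\vf_\Delta(\veps,t)=\vf(\vh_t+\veps,t)-\vf(\vh_t,t)$ and $\mG_\Delta(\veps,t)=\mG(\vh_t+\veps,t)-\mG(\vh_t,t)$. Both the Lipschitz bound and the linear-growth bound for these increments will then follow directly from Assumption~\ref{ass:lipschitz} on $\vf$ and $\mG$, with the constants inherited verbatim.

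First I would verify the Lipschitz property. For any $\vx,\vy\in\sR^n$ and any $t$, writing out the definitions and cancelling the common term gives $\|\vf_\Delta(\vx,t)-\vf_\Delta(\vy,t)\|=\|\vf(\vh_t+\vx,t)-\vf(\vh_t+\vy,t)\|$ and likewise for $\mG_\Delta$; summing these two identities and applying Assumption~\ref{ass:lipschitz} at the two points $\vh_t+\vx$ and $\vh_t+\vy$ yields $\|\vf_\Delta(\vx,t)-\vf_\Delta(\vy,t)\|+\|\mG_\Delta(\vx,t)-\mG_\Delta(\vy,t)\|\le c_2\|(\vh_t+\vx)-(\vh_t+\vy)\|=c_2\|\vx-\vy\|$, which is exactly Assumption~\ref{ass:lipschitz} for $\vf_\Delta,\mG_\Delta$ with the same constant $c_2$.

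Next I would obtain the (sub)linear-growth bound essentially for free. Setting $\vy=\vzero$ in the computation above and using $\vf_\Delta(\vzero,t)=\vzero$ and $\mG_\Delta(\vzero,t)=\vzero$ (already established in~\eqref{eq:zero-drift-diffusion}), the Lipschitz estimate immediately gives $\|\vf_\Delta(\vx,t)\|+\|\mG_\Delta(\vx,t)\|\le c_2\|\vx\|\le c_2(1+\|\vx\|)$, so Assumption~\ref{ass:sublinear} holds with $c_1=c_2$. In fact this is slightly stronger than required — the increments have genuinely linear growth with no additive constant — which is the structural reason the trivial solution $\veps_t\equiv\vzero$ is available in the subsequent stability analysis.

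There is no real obstacle here; the only points worth stating carefully are that $\vh_t$ is a fixed (though random) path, so all the estimates are pathwise and uniform in $t$, and that one never needs Assumption~\ref{ass:sublinear} as a hypothesis of this lemma — Lipschitzness of $\vf,\mG$ alone forces \emph{both} assumptions on the increments, precisely because the increments vanish at the origin.
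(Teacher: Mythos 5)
Your proof is correct and follows essentially the same route as the paper: cancel the common term $\vf(\vh_t,t)$ (resp.\ $\mG(\vh_t,t)$) to see that the increments inherit the Lipschitz constant $c_2$, then take $\vy=\vzero$ together with $\vf_\Delta(\vzero,t)=\mG_\Delta(\vzero,t)=\vzero$ to get the linear-growth bound $c_2\|\vx\|\le c_2(1+\|\vx\|)$. Your added remarks (estimates are pathwise in $\vh_t$, and Assumption~\ref{ass:sublinear} on $\vf,\mG$ is never needed) only make explicit what the paper's terser proof leaves implicit.
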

\begin{proof}
By Assumption \eqref{ass:lipschitz} on $\vf, \mG$, we can obtain that for any $\veps, \tilde{\veps}\in\sR^n, t\geq 0$
\begin{align*}
    &\Vert \vf_\Delta(\veps,t)\Vert + \Vert \mG_\Delta(\veps,t)\Vert \leq c_2 \Vert \veps \Vert \leq c_2 (1+\Vert \veps \Vert),\\
    &\Vert \vf_\Delta(\veps,t) - \vf_\Delta(\tilde{\veps},t)\Vert + \Vert \mG_\Delta(\veps,t)- \mG_\Delta(\tilde{\veps},t)\Vert \leq c_2 \Vert \veps -\tilde{\veps}\Vert.
\end{align*}
This guarantees the uniqueness of the solution of \eqref{eq:sde}.
\end{proof}

\begin{lemma}
\label{lem:nonzero}
For \eqref{eq:sde}, whenever $\veps_0\neq \vzero$, $\Pr\{\veps_t\neq \vzero \text{ for all } t\geq 0\}=1$.
\end{lemma}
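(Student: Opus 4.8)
The plan is to prove that, started from $\veps_0\neq\vzero$, the process \eqref{eq:sde} almost surely never reaches the origin, by exhibiting a Lyapunov-type function that is smooth on $\sR^n\setminus\{\vzero\}$, blows up at $\vzero$, and whose diffusion generator grows at most linearly in the function itself; a localization argument will then forbid the process from ever touching the level sets near $\vzero$. Writing $\tau_0=\inf\{t\ge 0:\veps_t=\vzero\}$, since $\Pr\{\tau_0<\infty\}=\lim_{t\to\infty}\Pr\{\tau_0\le t\}$ it suffices to show $\Pr\{\tau_0\le t\}=0$ for every finite $t$. By Lemma~\ref{lem:ass-for-fG-Delta} together with \eqref{eq:zero-drift-diffusion}, $\vf_\Delta,\mG_\Delta$ are Borel measurable and satisfy the linear-growth-at-the-origin bound $\|\vf_\Delta(\veps,t)\|+\|\mG_\Delta(\veps,t)\|\le c_2\|\veps\|$, and \eqref{eq:sde} has a unique global solution with continuous sample paths (hence $\sup_{0\le s\le t}\|\veps_s\|<\infty$ a.s.). I would then take the candidate $V(\veps)=\|\veps\|^{-1}$, which is $C^2$ on $\sR^n\setminus\{\vzero\}$.

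Next I would compute $\nabla V(\veps)=-\|\veps\|^{-3}\veps$ and $\nabla^2 V(\veps)=3\|\veps\|^{-5}\veps\veps^\top-\|\veps\|^{-3}\mI$, whose largest eigenvalue is $2\|\veps\|^{-3}$. Combining this with $\|\vf_\Delta(\veps,t)\|\le c_2\|\veps\|$, $\|\mG_\Delta(\veps,t)\|\le c_2\|\veps\|$, and the inequality $\Tr\big[\mG_\Delta^\top\nabla^2V\,\mG_\Delta\big]\le 2\|\veps\|^{-3}\|\mG_\Delta\|^2$, one obtains an estimate of the form
\begin{equation*}
\mathcal{L}V(\veps,t)=\nabla V(\veps)^\top\vf_\Delta(\veps,t)+\tfrac12\Tr\!\big[\mG_\Delta^\top(\veps,t)\,\nabla^2V(\veps)\,\mG_\Delta(\veps,t)\big]\le K\,V(\veps),\qquad K:=c_2+c_2^2,
\end{equation*}
valid for all $\veps\neq\vzero$, $t\ge 0$. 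By Itô's formula this makes $e^{-Kt}V(\veps_t)$ a local supermartingale on any interval along which $\veps_t$ stays inside an annulus bounded away from $\vzero$ and from $\infty$.

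For the localization step, fix $t>0$ and $j$ large enough that $\|\veps_0\|\in(1/j,j)$, and set $\theta_j=\inf\{s\ge 0:\|\veps_s\|\le 1/j\ \text{or}\ \|\veps_s\|\ge j\}$. On $[0,\theta_j]$ both $\nabla V$ and $\mG_\Delta$ are bounded, so the stochastic integral is a true martingale and $\mathbb{E}\big[e^{-K(\theta_j\wedge t)}V(\veps_{\theta_j\wedge t})\big]\le V(\veps_0)=\|\veps_0\|^{-1}$. On the event that $\veps$ hits level $1/j$ before level $j$ and before time $t$, the integrand on the left is at least $e^{-Kt}j$, so
\begin{equation*}
\Pr\big\{\veps\ \text{reaches}\ \|\cdot\|=1/j\ \text{before}\ \|\cdot\|=j\ \text{and before time}\ t\big\}\le \frac{e^{Kt}}{j\,\|\veps_0\|}.
\end{equation*}
If $\tau_0\le t$, path-continuity forces $\veps$ to cross every small level $1/j$ before $\tau_0$, and since $\sup_{s\le t}\|\veps_s\|<\infty$ a.s. it does so without first reaching level $j$ once $j$ exceeds that (random) supremum; hence $\{\tau_0\le t\}$ is, up to a null set, contained in the events above for all large $j$, and letting $j\to\infty$ gives $\Pr\{\tau_0\le t\}=0$. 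Letting $t\to\infty$ completes the proof.

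I expect the main obstacle to be the generator estimate $\mathcal{L}V\le KV$: it hinges on controlling the second-order term $\Tr[\mG_\Delta^\top\nabla^2V\,\mG_\Delta]$ through the operator norm of the sign-indefinite Hessian of $\|\veps\|^{-1}$ together with the linear bound on $\mG_\Delta$, and then on being careful that the Itô integral is only a \emph{local} martingale — which is exactly what the annulus stopping times $\theta_j$ are designed to absorb. The remaining ingredients (global existence and sample-path continuity, Itô's formula on the annulus, optional stopping) are routine once this inequality and the stopping-time setup are in place.
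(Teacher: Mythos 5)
Your proof is correct and follows essentially the same route as the paper's: the Lyapunov function $V(\veps)=\|\veps\|^{-1}$, the generator bound $\mathcal{L}V\le c_2(1+c_2)V$, Itô's formula for the exponentially discounted $V$ stopped on an annulus, and a bound on the probability of hitting the inner level before the outer one. The only difference is presentational — you argue directly by sending both annulus radii to their limits and using a.s. finiteness of the running supremum, whereas the paper fixes an event of positive probability and derives a contradiction as the inner radius shrinks — which does not change the substance of the argument.
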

\begin{proof}
We prove it by contradiction. Let $\tau=\inf \{t\geq 0: \veps_t = 0\}$. Then if it is not true, there exists some $\veps_0\neq \vzero$ such that $\Pr \{\tau <\infty\}>0$. Therefore, we can find sufficiently large constant $T>0$ and $\theta>1$ such that
$\Pr(A) \coloneqq \Pr\{\tau <T \text{ and } \vert \veps_t \vert\leq \theta-1, \ \forall\ 0\leq t\leq \tau\}>0$.
By Assumption \ref{ass:lipschitz} on $\vf$ and $\mG$, there exists a positive constant $K_\theta$ such that 
\begin{equation}
\label{eq:lemma-B2-1}
    \Vert \vf_\Delta(\veps,t)\Vert + \Vert\mG_\Delta(\veps,t)\Vert \leq K_\theta \Vert \veps \Vert, \quad \text{ for all } \Vert \veps \Vert \leq \theta \text{ and }  0\leq t \leq T.
\end{equation}
Let $V(\veps, t)= \Vert\eps\Vert^{-1}$. Then, for any $0\leq \Vert \veps \Vert \leq \theta$ and $0\leq t\leq T$, we have
\begin{align}
    \mathcal{L}V(\veps, t) &= -\Vert \veps \Vert^{-3} \veps^\top \vf_\Delta(\veps, t) + \frac{1}{2}\{ -\Vert \veps \Vert^{-3} \Vert \mG_\Delta (\veps, t) \Vert^{2} + 3 \Vert \veps \Vert^{-5} \Vert \veps^\top \mG_\Delta(\veps, t) \Vert^2 \}\nonumber\\
    &\leq \Vert \veps \Vert^{-2}\Vert \vf_\Delta(\veps, t) \Vert + \Vert \veps \Vert^{-3} \Vert \mG_\Delta(\veps, t) \Vert^2\nonumber\\
    &\leq K_\theta \Vert \veps \Vert^{-1} + K_\theta^2 \Vert \veps \Vert^{-1} = K_\theta(1+K_\theta) V(\veps, t),
\end{align}
where the first inequality comes from Cauchy-Schwartz and the last one comes from \eqref{eq:lemma-B2-1}.
For any $\delta \in (0, \Vert \veps_0 \Vert)$, we define the stopping time $\tau_{\delta}\coloneqq \inf\{ t\geq 0:  \Vert \veps_t \Vert \notin (\delta, \theta)\}$.
Let $\nu_\delta = \min\{\tau_\delta, T\}$. By It\^{o}'s formula, $ \E\bigg[ e^{-K_\theta(1+K_\theta)\nu_\delta} V(\veps_{\nu_\delta}, \nu_\delta) \bigg]$
\begin{align}
\label{eq:lemma-B2-2}
    = V(\veps_0 ,0) + \E \int_{0}^{\nu_\delta}e^{-K_\theta(1+K_\theta)}s \bigg[ -K_\theta(1+K_\theta) V(\veps_s, s) + \mathcal{L}V(\veps_s,s)\bigg]\dif s \leq  \Vert \veps_0 \Vert^{-1}.
\end{align}
Since $\tau_\delta \leq T$ and $\Vert \veps_{\tau_\delta} \Vert=\delta$ for any $\omega\in A$, then \eqref{eq:lemma-B2-2} implies
\begin{equation}
    \E\bigg[ e^{-K_\theta(1+K_\theta)T} \delta^{-1} \mathbf{1}_{A} \bigg] =  \delta^{-1}e^{-K_\theta(1+K_\theta)T} \Pr(A) \leq  \Vert \veps_0 \Vert^{-1}.
\end{equation}
Thus, $\Pr(A) \leq \delta \Vert \veps_0 \Vert^{-1} e^{K_\theta(1+K_\theta)T}$. Letting $\delta\rightarrow0$, we obtain $\Pr(A)=0$, which leads to a contradiction.
\end{proof}

\subsection*{Proof of Theorem \ref{th:exp-stable}}
We then prove Theorem \ref{th:exp-stable}. Clearly, \eqref{eq:exp-stable} holds for $\veps_0 = \vzero$ since $\veps_t \equiv \vzero$. For any $\veps_0\neq \vzero$, we have $\veps_t \neq \vzero$ for all $t\geq 0$ almost surely by Lemma \ref{lem:nonzero}. Thus, by applying It{\^o}'s formula and condition (2), we can show that for $t\geq 0$,
\begin{equation}
\label{eq:thm3.2-1}
    \log V(\veps_t,t)\leq \log V(\veps_0, 0) + c_2t + M(t) - \frac{1}{2}\int_{0}^{t} \frac{\vert V_1(\veps_s, s) \mG_\Delta(\veps_s,s)\vert^2}{V^2(\veps_s,s)} \dif s.
\end{equation}
where $M(t) = \int_{0}^{t} \frac{V_1(\veps_s, s) \mG_\Delta(\veps_s,s)}{V(\veps_s, s)}\dif \rvB_s$ is a continuous martingale with initial value $M(0)=0$.
By the exponential martingale inequality, for any arbitrary $\alpha\in(0,1)$ and $n=1,2,\cdots$, we have
\begin{equation}
\label{eq:thm3.2-2}
    \Pr\bigg \{ \underset{0\leq t\leq n}{\sup} \bigg[ M(t) - \frac{\alpha}{2} \int_{0}^{t} \frac{\vert V_1(\veps_s, s) \mG_\Delta(\veps_s,s)\vert^2}{V^2(\veps_s,s)} \dif s \bigg] > \frac{2}{\alpha} \log n \bigg \} \leq \frac{1}{n^2}.
\end{equation}
Applying Borel-Cantelli lemma, we can get that for almost all $\omega\in \Omega$, there exists an integer $n_0=n_0(\omega)$ such that if $n\geq n_0$,
\begin{equation}
\label{eq:thm3.2-3}
    M(t) \leq \frac{2}{\alpha} \log n + \frac{\alpha}{2} \int_{0}^{t} \frac{\vert V_1(\veps_s, s) \mG_\Delta(\veps_s,s)\vert^2}{V^2(\veps_s,s)} \dif s,\quad \forall \ 0\leq t\leq n.
\end{equation}
Combining \eqref{eq:thm3.2-1}, \eqref{eq:thm3.2-3} and condition (3), we can obtain that
\begin{equation}
    \log V(\veps_t,t)\leq \log V(\veps_0, 0)  - \frac{1}{2} [(1-\alpha)c_3 - 2c_2] t + \frac{2}{\alpha} \log n.
\end{equation}
for all $0\leq t \leq n$ and $n\geq n_0$ almost surely. Therefore, for almost all $\omega\in \Omega$, if $n-1\leq t \leq n$ and $n\geq n_0$, we have
\begin{equation}
    \frac{1}{t}\log V(\veps_t,t) \leq -\frac{1}{2}[(1-\alpha)c_3 - 2c_2] + \frac{\log V(\veps_0, 0) + \frac{2}{\alpha} \log n}{n-1}
\end{equation}
which consequently implies
\begin{equation}
    \underset{t\rightarrow \infty}{\limsup} \frac{1}{t} \log V(\veps_t,t) \leq -\frac{1}{2}[(1-\alpha)c_3 -2c_2])\quad a.s.
\end{equation}
With condition (1) and arbitrary choice of $\alpha\in(0,1)$, we can obtain \eqref{eq:exp-stable}.

\subsection*{Proof of Corollary \ref{cor:stabilize}}
We apply Theorem \ref{th:exp-stable} to establish the theories on stability of \eqref{eq:multiplicative-error-SDE}. Note that $\vf(\vh_t, t; \vw)$ is $L$-Lipschitz continuous w.r.t $\vh_t$ and $\mG(\vh_t,t;\vv) = \sigma\vh_t, m=1$. Then, \eqref{eq:multiplicative-error-SDE} has a unique solution, with $\vf_\Delta$ and $\mG_\Delta$ satisfying Assumptions (\ref{ass:sublinear},\ref{ass:lipschitz})
\begin{align*}
    &\Vert \vf_\Delta(\veps_t,t)\Vert + \Vert \mG_\Delta(\veps_t,t)\Vert \leq \max \{L, \sigma\} \Vert \veps_t \Vert\leq \max \{L, \sigma\} (1+\Vert \veps_t \Vert),\\
    &\Vert \vf_\Delta(\veps_t,t) - \vf_\Delta(\tilde{\veps}_t,t)\Vert + \Vert \mG_\Delta(\veps_t,t)- \mG_\Delta(\tilde{\veps_t},t)\Vert \leq \max \{L, \sigma\} \Vert \veps_t -\tilde{\veps_t}\Vert.
\end{align*}
To apply Theorem \ref{th:exp-stable}, let $V(\veps,t) = \Vert \veps \Vert^2 $. Then,
\begin{align*}
    &\mathcal{L}V(\veps,t) = 2\veps^\top \vf_\Delta(\veps, t) + \sigma^2 \Vert\veps\Vert^2 \leq (2L+\sigma^2) \Vert \veps \Vert^2 =(2L+\sigma^2) V(\veps,t) ,\\
    &\Vert V_1(\veps, t) \mG_\Delta(\veps, t) \Vert^2 = 4\sigma^2 V(\veps,t)^2.
\end{align*}
Let $c_1 =1, p=2, c_2 = 2L+\sigma^2, c_3=4\sigma^2$. By Theorem \ref{th:exp-stable}, we finished the proof.
\end{document}